\documentclass[letterpaper]{article} 
\usepackage{aaai}  
\nocopyright
\usepackage{times}  
\usepackage{helvet}  
\usepackage{courier}  

\usepackage{graphicx}

\usepackage{natbib}  
\usepackage{caption} 
\usepackage{algorithm}
\usepackage{algorithmic}
\usepackage{amsmath}
\usepackage{bm}
\usepackage{amsthm}
\usepackage{booktabs}       
\usepackage{amsfonts}       
\usepackage{nicefrac}       
\usepackage{microtype}      
\usepackage{xcolor}         
\usepackage{mathtools}

\usepackage{siunitx}  
\usepackage{multirow}

\usepackage{times}
\usepackage{helvet}
\usepackage{courier}
\usepackage{xcolor}

\usepackage{amsmath}
\usepackage{amssymb}
\usepackage{mathtools}
\usepackage{amsthm}

\usepackage[utf8]{inputenc}

\usepackage{url}            
\usepackage{booktabs}       
\usepackage{amsfonts}       
\usepackage{nicefrac}       
\usepackage{microtype}      
\usepackage{xcolor}

\usepackage{amsmath,amsfonts,bm}

\def\eqref#1{Eq.~(\ref{#1})}

\def\1{\bm{1}}

\DeclareMathAlphabet{\mathsfit}{\encodingdefault}{\sfdefault}{m}{sl}
\SetMathAlphabet{\mathsfit}{bold}{\encodingdefault}{\sfdefault}{bx}{n}

\usepackage{xcolor}         

\usepackage{algorithm}

\usepackage{algorithmic}
\usepackage{amsmath}
\usepackage{amssymb}
\usepackage{mathtools}
\usepackage{amsthm}
\usepackage{multicol}

\usepackage{caption}
\usepackage{graphicx}

\usepackage{enumitem}
\usepackage{subfigure}
\usepackage{bm}
\usepackage{array}
\usepackage{tabularx}
\usepackage{colortbl}
\usepackage{indentfirst}

\usepackage{lipsum}		
\usepackage{natbib}
\usepackage{appendix}

\usepackage{import}
\usepackage{bbding}
\usepackage{threeparttable}
\usepackage{dsfont}
\usepackage{mathrsfs}

\usepackage{graphicx}
\usepackage{multirow}
\usepackage{makecell}
\usepackage{tcolorbox}

\usepackage{minted} 
\usepackage{xcolor} 
\definecolor{bg}{rgb}{0.95,0.95,0.92}

\newcommand{\eg}{{\em e.g.}}

\theoremstyle{plain}
\newtheorem{theorem}{Theorem}[section]

\theoremstyle{definition}

\theoremstyle{remark}

\usepackage{amsmath}
\usepackage{amssymb}

\usepackage{booktabs}

\RequirePackage{etex} 
\usepackage{cleveref}

\usepackage{listings}
\usepackage{mathtools}
\usepackage{autonum}
\usepackage{bibunits}

\usepackage{newfloat}
\usepackage{listings}
\DeclareCaptionStyle{ruled}{labelfont=normalfont,labelsep=colon,strut=off} 
\floatstyle{ruled}
\newfloat{listing}{tb}{lst}{}
\floatname{listing}{Listing}

\title{Unbiased Top-$k$ Estimation for On-Policy Distillation} 

\author{
    Linjian Meng\equalcontrib\textsuperscript{\rm 1},
    Siyuan Gan\equalcontrib\textsuperscript{\rm 2},
    YuHan Li\textsuperscript{\rm 1},
    Xiran Wang\textsuperscript{\rm 1},\\
    Ziyang Ding\textsuperscript{\rm 1},
    Zitang Gou\textsuperscript{\rm 1},
    Yiming Wu\textsuperscript{\rm 1},
    Zhen Zhao\textsuperscript{\rm 1}\thanks{Corresponding author.}
}
\affiliations{
    \textsuperscript{\rm 1}Shanghai Artificial Intelligence Laboratory, Shanghai, China\\
    \textsuperscript{\rm 2}State Key Laboratory of Novel Software Technology, Nanjing University, Nanjing, China\\
    menglinjian@pjlab.org.cn, gansiyuan@smail.nju.edu.cn, zhaozhen@pjlab.org.cn
}

\usepackage{bibentry}

\begin{document}

\maketitle
\flushbottom 

\begin{abstract}
On-policy distillation (OPD) is becoming an important component of large language model (LLM) post-training for transferring the reasoning capability of a strong teacher LLM to a weaker student LLM.
OPD trains the student by minimizing the reverse KL divergence between the teacher and the student via rollouts generated by the student's policy.
However, estimating the gradient of the reverse KL divergence in OPD remains a challenge. 
Using only the sampled token from the student-generated rollout is computationally cheap but provides limited distributional supervision, which will degrade accuracy. 
In addition, using the full vocabulary provides complete distributional supervision but is computationally expensive.
Therefore, recent works propose Top-$k$ OPD (TK-OPD) that use selected top-$k$ tokens, which provides richer distributional supervision than sampled-token estimation at substantially lower computational cost than full-vocabulary estimation.
Unfortunately, using only the selected top-$k$ tokens induces bias, leading to accuracy degradation, as the probability mass outside the selected top-$k$ tokens is discarded.
To address the bias of TK-OPD, we propose \textit{Tail-Corrected Top-$k$ On-Policy Distillation} (TT-OPD).
It preserves the advantages of TK-OPD, including rich distributional supervision and low computational cost, while providing an unbiased estimator of the gradient of the reverse KL divergence.
The key insight of TT-OPD is to use not only the selected top-$k$ tokens, but also the sampled token from the student-generated rollout, thereby recovering the discarded probability mass in expectation, avoiding the bias.
Experimental results demonstrate that TT-OPD significantly outperforms other tested OPD variants.
\end{abstract}

\section{Introduction}\label{sec:Introduction}

\begin{figure}[!t]
    \centering
    \includegraphics[width=0.88\columnwidth]{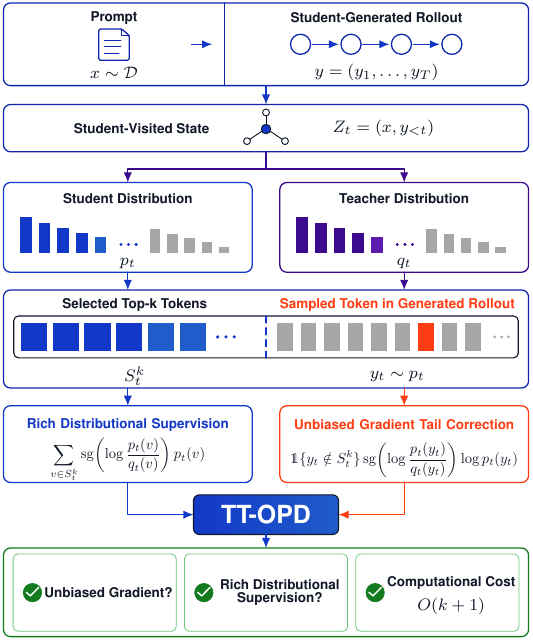}
    \vspace{-0.15cm}
    \caption{\textbf{Overview of TT-OPD.} The notation $sg(\cdot)$ denotes the stop-gradient operator. At each student-visited state $Z_t$, the selected top-$k$ tokens $S_t^k$ provide rich distributional supervision, while the sampled token $y_t$ in the student-generated rollout provides an unbiased gradient tail correction. This only induces $O(k+1)$ computational cost.}
    \label{fig:tt-opd-overview}
    \vspace{-0.2cm}
\end{figure}

On-policy distillation (OPD)~\citep{agarwal2024policy,Lu2025thinkinglab} is becoming an important component of large language model (LLM) post-training, as evidenced by its utilization in various frontier-level LLMs~\citep{yang2025qwen3,zeng2026glm,xu2026deepseek,xiao2026mimo}. This is due to OPD's ability to transfer the reasoning capability of a strong teacher LLM to a weaker student LLM~\citep{gu2024minillm,xu2025speculative,fu2026revisiting,oh2026kl,xiao2026mimo,zheng2026scope,zhu2026many,zhao2026self,yang2026learning,hubotter2026reinforcement,shenfeld2026self,yang2026learning,wu2026lightning,hou2026uni,wang2026teachability,xie2026position}.
Specifically, unlike off-policy distillation, which trains the student on teacher-generated rollouts, OPD minimizes the reverse KL divergence between the teacher and the student using rollouts generated by the student's own policy. This on-policy setting enables the student to refine its behavior on states it actually visits~\citep{li2026rethinking}. It reduces the substantial exposure bias inherent in off-policy distillation that hinders the transfer of reasoning capability from a strong teacher LLM to a weaker student LLM~\citep{song2026survey,hou2026uni}.

\begin{table*}[t]
\centering
\begingroup
\renewcommand{\arraystretch}{1.1}
\renewcommand{\tabularxcolumn}[1]{m{#1}}
\begin{tabularx}{0.95\textwidth}{>{\hsize=0.7\hsize\centering\arraybackslash}X>{\hsize=1.0\hsize\centering\arraybackslash}X>{\hsize=1.35\hsize\centering\arraybackslash}X>{\hsize=0.95\hsize\centering\arraybackslash}X}
\toprule
\textbf{  } & \textbf{Unbiased Gradient?} & {\small\textbf{Rich Distributional Supervision?}} & \textbf{Computational Cost} \\
\midrule
ST-OPD & \textcolor{green!60!black}{$\checkmark$} & \textcolor{red}{$\times$} & $O(1)$ \\
FV-OPD & \textcolor{green!60!black}{$\checkmark$} & \textcolor{green!60!black}{$\checkmark$} & $O(|\mathcal{V}|)$ \\
TK-OPD & \textcolor{red}{$\times$} & \textcolor{green!60!black}{$\checkmark$} & $O(k)$ \\
\rowcolor{blue!8}
TT-OPD & \textcolor{green!60!black}{$\checkmark$} & \textcolor{green!60!black}{$\checkmark$} & $O(k+1)$ \\
\bottomrule
\end{tabularx}
\endgroup
\caption{Comparison between our TT-OPD and other OPD variants. The notation $\mathcal{V}$ denotes the vocabulary.}
\label{tab:opd-variants}
\vspace{-0.35cm}
\end{table*}

Unfortunately, estimating the gradient of the reverse KL divergence in OPD remains a challenge. Vanilla OPD, also called Sampled-Token OPD (ST-OPD), uses only the sampled token in the student-generated rollout to estimate the gradient of the reverse KL divergence~\citep{Lu2025thinkinglab}. While this approach incurs low computational cost, it provides limited distributional supervision, which will degrade accuracy. To overcome this limitation, some works propose Full-Vocabulary OPD (FV-OPD) that computes the gradient of the reverse KL divergence using the full vocabulary~\citep{xu2026deepseek}. However, this incurs prohibitive computational cost. For example, Qwen3~\citep{yang2025qwen3} has a vocabulary of $151,646$ tokens, so FV-OPD evaluates $151,646$ terms at each position, inducing $151,646$ times larger computational cost than ST-OPD. To trade off the richness of distributional supervision and computational cost, recent works introduce Top-$k$ OPD (TK-OPD) that adopts selected top-$k$ tokens to provide rich  distributional supervision~\citep{hubotter2026reinforcement,li2026rethinking}, where $k$ is much smaller than the vocabulary size. However, this induces bias as the probability mass outside the selected top-$k$ tokens is discarded~\citep{zhu2026many}, leading to accuracy degradation.

To address the bias of TK-OPD, we propose a novel OPD variant called \textit{Tail-Corrected Top-$k$ On-Policy Distillation} (TT-OPD).
It preserves the advantages of TK-OPD, including rich distributional supervision and low computational cost, while providing an unbiased estimator of the gradient of the reverse KL divergence.
As shown in \Cref{fig:tt-opd-overview}, the key insight of TT-OPD is to use not only the selected top-$k$ tokens, but also the sampled token in the student-generated rollout.
Specifically, since the sampled token is drawn from the full student distribution, it provides a tail correction to recover the probability mass outside the selected top-$k$ tokens in expectation, thereby avoiding the bias of TK-OPD. 
More importantly, TT-OPD also retains all advantages of TK-OPD, \eg, rich distributional supervision and low computational cost. 
To more clearly illustrate the differences between TT-OPD and other OPD variants, we present Table~\ref{tab:opd-variants}.

To evaluate TT-OPD, we employ the Qwen3~\citep{yang2025qwen3} and DeepSeek~\citep{guo2025deepseekr1} model series and conduct experiments on mathematical reasoning benchmarks and code-generation benchmarks. Experimental results demonstrate that TT-OPD yields significant accuracy improvements over ST-OPD and TK-OPD across student scales, model series, selected top-$k$ tokens, values of $k$, and tasks.\footnote{Due to the prohibitive computational cost of FV-OPD, we are unable to compare with it.}

\section{Related Work}\label{sec:Related Work}

\textbf{Off-Policy Distillation.}
A major challenge in LLM post-training is transferring the reasoning capability of a strong teacher LLM to a weaker student LLM. The initial approach, off-policy distillation, trains the student using teacher-generated rollouts. 
One line of off-policy distillation algorithm aligns the student's distribution with that of the teacher on a static dataset, typically through a forward-KL objective~\citep{hinton2015distilling,xu2024surveyllmkd,liu2024ddk,guo2025learningfocus,deng2026less,ko2025distillm2}.
Another line of off-policy distillation algorithms trains the student on teacher-generated tokens, as in supervised fine-tuning (SFT) style imitation~\citep{wang2022selfinstruct,guo2025deepseekr1,zhang2025openmmreasoner,cha2024honeybee,zhang2025bee}. 
However, because off-policy distillation uses states from teacher-generated rollouts, these states may differ from those the student actually visits, inducing substantial exposure bias that hinders reasoning capability transfer~\citep{song2026survey,hou2026uni}.

\textbf{On-Policy Distillation.}
To reduce the exposure bias in off-policy distillation, recent works adopt on-policy distillation (OPD)~\citep{agarwal2024policy,Lu2025thinkinglab}, as exemplified by various frontier-level LLMs~\citep{yang2025qwen3,zeng2026glm,xu2026deepseek,xiao2026mimo}. OPD trains the student through student-generated rollouts, ensuring that the training states match those the student actually visits to reduce the exposure bias. Specifically, after sampling rollouts from the student's policy, OPD trains the student by minimizing the reverse KL divergence at each position of the student-generated rollouts (see more details in \Cref{sec:Preliminaries}).

\textbf{Gradient Estimation for the Reverse KL Divergence in OPD.}
A central challenge in OPD is estimating the gradient of the reverse KL divergence under the large vocabulary of modern LLMs. Sampled-Token OPD (ST-OPD) uses only the sampled token, achieving low computational cost, but providing limited distributional supervision, which degrades accuracy~\citep{Lu2025thinkinglab,oh2026kl}. Full-Vocabulary OPD (FV-OPD) uses full-vocabulary distributional supervision by summing all tokens over the full vocabulary, but its computational cost scales with vocabulary size, making it expensive for modern large-vocabulary LLMs~\citep{xu2026deepseek}. Top-$k$ OPD (TK-OPD) trades off the richness of distributional supervision and computational cost by using the selected top-$k$ tokens to estimate the gradient of the reverse KL divergence~\citep{hubotter2026reinforcement,li2026rethinking}. However, discarding probability mass outside the selected top-$k$ tokens introduces bias~\citep{zhu2026many}.

In contrast to these three OPD variants, our TT-OPD retains all advantages of TK-OPD, rich distributional supervision and low computational cost independent of vocabulary size, while also inheriting the unbiasedness of the estimated gradient of ST-OPD and FV-OPD. Consequently, TT-OPD is as suitable for modern large-vocabulary LLMs as ST-OPD and TK-OPD, yet significantly outperforms both.

\section{Preliminaries}\label{sec:Preliminaries}

\subsection{OPD Framework}\label{subsec:OPD Framework}

We first formalize the OPD framework used in this paper. Let $\pi_\theta$ denote the student policy and $\pi_{\mathrm{tea}}$ denote the fixed teacher policy over a shared vocabulary $\mathcal{V}$. Given a prompt $x\sim\mathcal{D}$, OPD samples a rollout from the student's policy:
\begin{equation}
\label{eq:prelim-student-rollout}
\begin{aligned}
    y=(y_1,\ldots,y_T)\sim\pi_\theta(\cdot\mid x),
    \qquad T=|y|,
\end{aligned}
\end{equation}
where $T$ is the realized rollout length. We refer to $y$ as the student-generated rollout since it is sampled via student's policy $\pi_\theta$. For each position $t\in\{1,\ldots,T\}$, let $Z_t=(x,y_{<t})$ denote the student-visited state before token $y_t$. The student and teacher next-token distributions at $Z_t$ are
\begin{equation}
\label{eq:prelim-token-distributions}
\begin{aligned}
    p_t(v) &\triangleq \pi_\theta(v\mid Z_t),\\
    q_t(v) &\triangleq \pi_{\mathrm{tea}}(v\mid Z_t),
    \qquad v\in\mathcal{V}.
\end{aligned}
\end{equation}

At position $t$, the full-vocabulary reverse KL divergence is
\begin{equation}
\label{eq:prelim-token-kl}
\begin{aligned}
    d_t(\theta)
    &\triangleq D_{\mathrm{KL}}(p_t\|q_t)
    = \sum_{v\in\mathcal{V}} p_t(v)\log\frac{p_t(v)}{q_t(v)}.
\end{aligned}
\end{equation}
In practical training, OPD constructs a single-sample empirical objective over the finite training set. Specifically, for each prompt $x\in\mathcal{D}$, it samples one rollout
\begin{equation}
\label{eq:prelim-dataset-rollouts}
\begin{aligned}
    y^{(x)}=(y^{(x)}_1,\ldots,y^{(x)}_{T_x})
    \sim\pi_\theta(\cdot\mid x),
    \qquad T_x=|y^{(x)}|.
\end{aligned}
\end{equation}
Let $d_{x,t}(\theta)$ denote the divergence defined above, evaluated at $Z_{x,t}=(x,y^{(x)}_{<t})$. Using token-mean aggregation, the resulting OPD objective is
\begin{equation}
\label{eq:prelim-opd-objective}
\begin{aligned}
    \mathcal{L}_{\mathrm{OPD}}(\theta)
    \triangleq
    \frac{1}{\sum_{x\in\mathcal{D}}T_x}
    \sum_{x\in\mathcal{D}}
    \sum_{t=1}^{T_x} d_{x,t}(\theta).
\end{aligned}
\end{equation}

Following standard OPD training, the sampled rollouts, their realized lengths, and the visited states are treated as fixed during backpropagation; gradients do not propagate through the rollout-sampling process. Let $g_{x,t}^\star\triangleq\nabla_\theta d_{x,t}(\theta)$. The corresponding training gradient targeted by all estimators in this paper is
\begin{equation}
\label{eq:prelim-opd-gradient}
\begin{aligned}
    g_{\mathrm{OPD}}(\theta)
    \triangleq \nabla_\theta\mathcal{L}_{\mathrm{OPD}}(\theta)
    = \frac{1}{\sum_{x\in\mathcal{D}}T_x}
    \sum_{x\in\mathcal{D}}
    \sum_{t=1}^{T_x} g_{x,t}^\star.
\end{aligned}
\end{equation}
When discussing a generic rollout below, we omit the prompt index and write $g_t^\star$ for $g_{x,t}^\star$.
Unlike off-policy distillation algorithms, OPD evaluates this token-level objective on states visited by the student's own policy. This on-policy setting reduces the exposure bias caused by training on teacher-generated data~\citep{li2026rethinking,song2026survey}.

\subsection{Three OPD Variants}\label{Three OPD Variants}

One of the computational bottlenecks of the OPD training lies in estimating $g_t^\star$ for modern large-vocabulary LLMs. Existing OPD variants mainly differ in the number of vocabulary tokens used to form this estimate, leading to different trade-offs among bias, richness of distributional supervision, and computational cost.

\textbf{Sampled-Token OPD (ST-OPD).}
Vanilla OPD, also called ST-OPD, uses only the token $y_t$ sampled from the student-generated rollout~\citep{Lu2025thinkinglab}. Since $y_t\sim p_t$ at a fixed state $Z_t$, its stop-gradient surrogate is
\begin{equation}
\label{eq:prelim-sampled-token-opd}
\begin{aligned}
    \mathcal{L}^{\mathrm{ST}}_t(\theta)
    = \operatorname{sg}\!\left(
        \log \frac{p_t(y_t)}{q_t(y_t)}
      \right)
      \log p_t(y_t),
\end{aligned}
\end{equation}
where $\operatorname{sg}(\cdot)$ denotes the stop-gradient operator. Its gradient is an unbiased estimator of the gradient of the reverse KL divergence~\citep{oh2026kl}:
\begin{equation}
\label{eq:prelim-sampled-token-equal}
\begin{aligned}
    \mathbb{E}_{y_t\sim p_t}
    \left[
        \nabla_\theta\mathcal{L}^{\mathrm{ST}}_t(\theta)
        \mid Z_t
    \right]
    = g_t^\star.
\end{aligned}
\end{equation}
ST-OPD therefore has low computational cost, but its reliance on one sampled token provides limited distributional supervision, which degrades accuracy.

\textbf{Full-Vocabulary OPD (FV-OPD).}
FV-OPD evaluates the reverse KL divergence over the entire vocabulary~\citep{xu2026deepseek}:
\begin{equation}
\label{eq:prelim-full-vocabulary-opd}
\begin{aligned}
    \mathcal{L}^{\mathrm{FV}}_t(\theta)
    = D_{\mathrm{KL}}(p_t\|q_t)
    = \sum_{v\in\mathcal{V}} p_t(v)\log\frac{p_t(v)}{q_t(v)}.
\end{aligned}
\end{equation}
Consequently,
\begin{equation}
\label{eq:prelim-full-vocabulary-opd-equal}
\begin{aligned}
    \nabla_\theta\mathcal{L}^{\mathrm{FV}}_t(\theta)
    = g_t^\star.
\end{aligned}
\end{equation}
This estimator provides full-vocabulary distributional supervision, but it incurs prohibitive computational cost because it evaluates token-level terms over the full vocabulary.

\textbf{Top-$k$ OPD (TK-OPD).}
To trade off the richness of distributional supervision and computational cost, TK-OPD restricts the reverse KL computation divergence to a selected subset of the full vocabulary~\citep{hubotter2026reinforcement,li2026rethinking,oh2026kl,zhu2026many}. Let $S_t^k$ denote the selected top-$k$ tokens at position $t$ that can be the student's top-$k$ tokens $S_{t,\mathrm{stu}}^k$, the teacher's top-$k$ tokens $S_{t,\mathrm{tea}}^k$, or their overlap $S_{t,\mathrm{stu}}^k\cap S_{t,\mathrm{tea}}^k$. For every $v\in S_t^k$, TK-OPD renormalizes the student and teacher distributions within $S_t^k$:
\begin{equation}
\label{eq:prelim-topk-normalization}
\begin{aligned}
    p_t^{S_t^k}(v) &\triangleq
    \frac{p_t(v)}{\sum_{u\in S_t^k}p_t(u)},\\
    q_t^{S_t^k}(v) &\triangleq
    \frac{q_t(v)}{\sum_{u\in S_t^k}q_t(u)}.
\end{aligned}
\end{equation}
Its per-position loss is
\begin{equation}
\label{eq:prelim-topk-opd}
\begin{aligned}
    \mathcal{L}^{\mathrm{TK}}_t(\theta)
    &= D_{\mathrm{KL}}(p_t^{S_t^k}\|q_t^{S_t^k})\\
    &= \sum_{v\in S_t^k}p_t^{S_t^k}(v)
    \log\frac{p_t^{S_t^k}(v)}{q_t^{S_t^k}(v)}.
\end{aligned}
\end{equation}
TK-OPD provides richer distributional supervision than ST-OPD and its computational cost is much cheaper than FV-OPD because $1\ll k\ll|\mathcal{V}|$. However, TK-OPD introduces bias because the probability mass outside the selected top-$k$ tokens is discarded~\citep{zhu2026many}:
\begin{equation}
\label{eq:prelim-topk-opd-noequal}
\begin{aligned}
    \nabla_\theta\mathcal{L}^{\mathrm{TK}}_t(\theta)
    \neq g_t^\star.
\end{aligned}
\end{equation}
Thus, although TK-OPD provides a trade-off between the richness of distributional supervision and computational cost, it cannot provide an unbiased estimator of the gradient of the reverse KL divergence, leading to accuracy degradation.

\section{Methodology}\label{sec:our_methods}

To mitigate the bias of estimating the gradient in TK-OPD, we propose \textit{Tail-Corrected Top-$k$ On-Policy Distillation} (TT-OPD). Its key insight is to use not only the selected top-$k$ tokens, but also the sampled token in the student-generated rollout. Specifically, it retains the tokens $S_t^k$ used by TK-OPD but also reuses the token $y_t$ sampled in the student-generated rollout $y$. Because $y_t$ is drawn from the full student distribution, it provides a tail correction to recover the discarded probability mass outside the selected top-$k$ tokens in expectation. In other words, the selected top-$k$ tokens provide rich distributional supervision, while the sampled token $y_t$ provides an unbiased correction for the discarded tail contribution. More importantly, TT-OPD incurs a vocabulary-independent computational cost of only $O(k+1)$, which is nearly identical to the $O(k)$ computational cost of TK-OPD and substantially lower than the $O(|\mathcal{V}|)$ computational cost of FV-OPD.

\subsection{Overview of TT-OPD}\label{subsec:tt-opd-overview}

TT-OPD uses the on-policy token distributions defined in \Cref{subsec:OPD Framework}. At a student-visited state $Z_t=(x,y_{<t})$, the student distribution is $p_t$, the teacher distribution is $q_t$, and the sampled token $y_t$ satisfies $y_t\sim p_t$. Let $S_t^k$ denote the same selected tokens used by TK-OPD.

The selected set $S_t^k$ specifies which token-level gradient contributions are computed exactly, while the sampled token $y_t$ provides the tail correction. TT-OPD defines the following per-position loss:
\begin{equation}
\label{eq:tt-opd-loss}
\begin{aligned}
    \mathcal{L}^{\mathrm{TT}}_t(\theta)
    =
    &\sum_{v\in S_t^k}
    \operatorname{sg}\!\left(
        \log\frac{p_t(v)}{q_t(v)}
    \right)p_t(v)\\
    &+\mathds{1}\{y_t\notin S_t^k\}
    \operatorname{sg}\!\left(
        \log\frac{p_t(y_t)}{q_t(y_t)}
    \right)\log p_t(y_t).
\end{aligned}
\end{equation}
The first term exactly computes the contributions of tokens in $S_t^k$ to the gradient of the reverse KL divergence. The second term estimates the omitted contribution from the tail $\mathcal{V}\setminus S_t^k$. When $y_t\in S_t^k$, this tail correction is omitted because the sampled token is already covered by the first term. The computational cost of TT-OPD at each position is at most $O(|S_t^k|+1)$, which is $O(k+1)$.

Now, we provide a theoretical analysis to demonstrate that TT-OPD provides an unbiased estimator of the gradient of the reverse KL divergence, as shown in \Cref{thm:tt-opd-unbiased}.

\begin{theorem}\label{thm:tt-opd-unbiased}
[Proof is in \Cref{sec:proof-tt-opd-unbiased}.]
At any fixed $Z_t=(x,y_{<t})$, the TT-OPD loss in Eq.~(\ref{eq:tt-opd-loss}) satisfies
\begin{equation}
\label{eq:tt-opd-unbiased-gradient}
\begin{aligned}
    \mathbb{E}_{y_t\sim p_t}
    \left[
        \nabla_\theta\mathcal{L}^{\mathrm{TT}}_t(\theta)
        \mid Z_t
    \right]
    = g_t^\star.
\end{aligned}
\end{equation}
\end{theorem}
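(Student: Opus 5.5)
The plan is a direct computation. Write the exact gradient $g_t^\star$ as a sum over the vocabulary, split that sum into the selected set $S_t^k$ and its complement, and check that the two terms of Eq.~(\ref{eq:tt-opd-loss}) reproduce the two pieces: the first deterministically, the second in expectation over $y_t\sim p_t$.

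Throughout I use two conventions from \Cref{subsec:OPD Framework}. First, the state $Z_t$ is fixed. Second, $S_t^k$ is determined by $Z_t$, since it is a top-$k$ set of $p_t$, $q_t$, or their overlap, and so it does not depend on $y_t$. Write $r_t(v)\triangleq\log\frac{p_t(v)}{q_t(v)}$.

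\textbf{Step 1 (the target).} Differentiate Eq.~(\ref{eq:prelim-token-kl}) with $q_t$ independent of $\theta$:
\[
g_t^\star=\sum_{v\in\mathcal{V}}\nabla_\theta p_t(v)\,r_t(v)+\sum_{v\in\mathcal{V}}p_t(v)\nabla_\theta\log p_t(v).
\]
The second sum equals $\nabla_\theta\sum_v p_t(v)=\nabla_\theta 1=0$. Hence $g_t^\star=\sum_{v}r_t(v)\nabla_\theta p_t(v)$, and I split this as $\sum_{v\in S_t^k}+\sum_{v\notin S_t^k}$.

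\textbf{Step 2 (the head term).} The stop-gradient freezes $r_t(v)$, so the gradient of the first term of Eq.~(\ref{eq:tt-opd-loss}) is exactly $\sum_{v\in S_t^k}r_t(v)\nabla_\theta p_t(v)$. This does not involve $y_t$, so its conditional expectation is itself and it matches the head part of $g_t^\star$.

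\textbf{Step 3 (the tail term).} This is the only step needing care. Because the rollout is treated as fixed during backpropagation, the gradient of the second term is $\mathds{1}\{y_t\notin S_t^k\}\,r_t(y_t)\,\nabla_\theta\log p_t(y_t)$. The indicator and the stop-gradient factor are constants with respect to $\theta$. Taking the expectation over $y_t\sim p_t$ as an explicit finite sum over $\mathcal{V}$ gives
\[
\sum_{v\notin S_t^k}p_t(v)\,r_t(v)\,\nabla_\theta\log p_t(v)=\sum_{v\notin S_t^k}r_t(v)\nabla_\theta p_t(v).
\]
This uses the log-derivative identity $p\nabla\log p=\nabla p$, valid wherever $p_t(v)>0$, which holds for softmax policies. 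Because the expectation is a finite sum, no regularity issue arises in swapping it with the gradient.

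The main point to verify, which I flag as the potential obstacle, is precisely this independence of $S_t^k$ from $y_t$. If the selected set were allowed to depend on the sampled token, the indicator could not be pulled through the expectation as a fixed partition, and the argument would fail. Given that independence, adding Steps 2 and 3 yields $\sum_{v\in\mathcal{V}}r_t(v)\nabla_\theta p_t(v)=g_t^\star$, which is Eq.~(\ref{eq:tt-opd-unbiased-gradient}).
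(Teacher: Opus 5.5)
Your proposal is correct and matches the paper's proof step for step. Like you, the paper holds $S_t^k$ fixed and differentiates Eq.~(\ref{eq:tt-opd-loss}), then uses $p\nabla\log p=\nabla p$ to turn the tail expectation into $\sum_{v\notin S_t^k}\log\frac{p_t(v)}{q_t(v)}\nabla_\theta p_t(v)$, and finally reduces $g_t^\star$ to $\sum_{v\in\mathcal{V}}\log\frac{p_t(v)}{q_t(v)}\nabla_\theta p_t(v)$ via $\sum_{v}\nabla_\theta p_t(v)=0$. Your remark that $S_t^k$ must not depend on $y_t$ is exactly what the paper assumes when it conditions on $Z_t$ and its selected set.
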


\subsection{Discussion of TT-OPD}\label{subsec:Discussion of TT-OPD}

Compared with ST-OPD, FV-OPD, and TK-OPD, TT-OPD combines their complementary advantages without inheriting their main limitations. Firstly, like ST-OPD and FV-OPD, TT-OPD provides an unbiased estimator of the gradient of the reverse KL divergence. However, ST-OPD provides limited distributional supervision, which can degrade accuracy, while FV-OPD incurs a prohibitive computational cost of $O(|\mathcal{V}|)$. In contrast, TT-OPD provides rich distributional supervision at a vocabulary-independent cost of only $O(k+1)$. Secondly, TT-OPD retains the rich distributional supervision and low computational cost of TK-OPD, while avoiding its gradient bias. Consequently, TT-OPD is as suitable for modern large-vocabulary LLMs as ST-OPD and TK-OPD, while significantly outperforming both in our experiments.

Additionally, the core of TT-OPD is to replace only the per-position OPD loss. This makes TT-OPD compatible with improvements to other components of the OPD pipeline. For example, TT-OPD can be combined with OPD variants that avoid sampling complete rollouts through fixed, progressive, truncated, or adaptive rollout horizons. These include Early Stopping Rollout (ESR)~\citep{zhou2026earlystopping}, Progressive OPD (POPD), Truncated OPD (TOPD)~\citep{zhang2026fullrollouts}, and KAT~\citep{xin2026escaping}. Under these combinations, the TT-OPD loss is computed only at the retained positions. Moreover, TT-OPD can be combined with OPD variants that reweight token positions. Examples include Teachability-Aware OPD (TA-OPD)~\citep{wang2026teachability} and Importance-Weighted OPD (IW-OPD)~\citep{xie2026position}. These variants determine the position weights, whereas TT-OPD determines how the gradient of the reverse-KL divergence is estimated. These examples are not exhaustive; due to space limitations, we do not enumerate all compatible OPD improvements.

\section{Experiments}\label{sec:experiments}

\providecommand{\resulttodo}{\textcolor{red}{TBD}}

\newcounter{mycounter}
\newcommand\showmycounter{\stepcounter{mycounter}\themycounter}
\newcommand{\findingbox}[2][]{
    \refstepcounter{mycounter}
    \if\relax\detokenize{#1}\relax\else\label{#1}\fi
    \begin{tcolorbox}[colframe=black,
                      arc=1pt,
                      boxsep=-2pt,
                      before skip=5pt,
                      after skip=5pt,
                      ]
        \noindent{\textbf{\textit{Finding \themycounter.}}} #2
    \end{tcolorbox}
}

\subsection{Experimental Setup}\label{subssec:Experimental Setup}

\textbf{Models and Training Data.} Unless otherwise stated, we employ the Qwen3 model series~\citep{yang2025qwen3}. Specifically, the teacher is Qwen3-8B-Base-GRPO-Math, which is obtained by training Qwen3-8B-Base with GRPO on DAPO-Math-17K~\citep{li2026rethinking}. The students are Qwen3-4B-Base and Qwen3-1.7B-Base. All models used in this paper have thinking mode enabled to achieve the highest possible accuracy. Unless otherwise stated, all OPD variants tested in this paper are trained on DAPO-Math-17K for two epochs.

\textbf{Baselines and Training Configuration.} Unless otherwise stated, we compare TT-OPD against ST-OPD and TK-OPD. FV-OPD is omitted as using the entire vocabulary at every position is prohibitively expensive. Unless otherwise stated, TK-OPD and TT-OPD both use the student's top-$16$ set tokens, following the default configuration of \citet{li2026rethinking}. We use a training temperature of $1.0$, batch size of $256$, one rollout per prompt, a maximum prompt length of $2{,}048$, a maximum response length of $8{,}192$, a learning rate of $10^{-6}$, and top-$p=1.0$. Our implementation is based on verl~\citep{sheng2024hybridflow}, and all experiments are run on a server with eight NVIDIA H200 GPUs and $1{,}600$ GB of system memory.

\textbf{Evaluation Configuration.} Unless otherwise stated, we evaluate on seven mathematical reasoning benchmarks: AIME 2024, AIME 2025, AIME 2026, AMC, MATH, Minerva, and Olympiad. We sample 32 responses per prompt for AIME 2024, AIME 2025, AIME 2026, and AMC, and 8 responses per prompt for others. All responses are sampled with temperature $0.6$, top-$p=0.95$, top-$k=20$, and a maximum response length of $32{,}768$ tokens.

\begin{table*}[t]
\centering
\begingroup
\renewcommand{\arraystretch}{1.25}
\setlength{\tabcolsep}{4pt}
\begin{tabularx}{0.95\textwidth}{>{\raggedright\arraybackslash}p{0.14\textwidth}*{8}{>{\centering\arraybackslash}X}}
\toprule
 & \textbf{AIME24} & \textbf{AIME25} & \textbf{AIME26} & \textbf{AMC} & \textbf{MATH} & \textbf{Minerva} & \textbf{Olympiad} & \textbf{Avg.} \\
\midrule
Teacher & 31.15 & 24.69 & 22.71 & 70.63 & 88.02 & 38.10 & 58.91 & 47.74 \\
\midrule
\multicolumn{9}{c}{\textit{Qwen3-4B-Base Student}} \\
Student & 8.02 & 5.31 & 6.15 & 30.80 & 54.07 & 22.79 & 25.22 & 21.77 \\
ST-OPD & 21.04 & 19.17 & 14.27 & 57.83 & 84.67 & 34.33 & 53.44 & 40.68 \\
TK-OPD & 19.79 & 19.17 & 14.27 & 59.04 & 85.23 & 35.29 & 52.31 & 40.73 \\
\rowcolor{blue!8}
\textbf{TT-OPD} & \textbf{26.77} & \textbf{23.33} & \textbf{22.92} & \textbf{64.87} & \textbf{86.08} & \textbf{36.81} & \textbf{54.70} & \textbf{45.07} \\
\midrule
\multicolumn{9}{c}{\textit{Qwen3-1.7B-Base Student}} \\
Student & 3.65 & 1.04 & 2.81 & 25.34 & 52.45 & 15.99 & 22.93 & 17.74 \\
ST-OPD & 9.79 & 2.50 & 2.08 & 37.05 & 66.33 & 20.59 & 30.91 & 24.18 \\
TK-OPD & 8.54 & 2.40 & 4.90 & 35.54 & 68.00 & 22.47 & 31.39 & 24.75 \\
\rowcolor{blue!8}
\textbf{TT-OPD} & \textbf{14.17} & \textbf{6.98} & \textbf{8.33} & \textbf{42.13} & \textbf{69.60} & \textbf{24.63} & \textbf{36.48} & \textbf{28.90} \\
\bottomrule
\end{tabularx}
\endgroup
\caption{\textbf{Main results on mathematical reasoning (accuracy, \%).} Qwen3-8B-Base-GRPO-Math is the teacher for all tested algorithms. All tested algorithms use DAPO-Math-17K as the training set. The student's top-$16$ tokens are used as the selected top-$k$ tokens. \textbf{Bold} = best among the tested algorithms within each student block.}
\label{tab:main-math-results}
\end{table*}

\subsection{Main Results on Mathematical Reasoning}

\findingbox{TT-OPD consistently outperforms ST-OPD and TK-OPD across student scales.}

To evaluate our TT-OPD, we compare it with ST-OPD and TK-OPD on seven mathematical reasoning benchmarks. We report accuracy and pass@$k$ in \Cref{tab:main-math-results} and \Cref{tab:main-math-passk-results} (Appendix~\ref{sec:full-experimental-results}), respectively. In terms of accuracy, TT-OPD achieves the highest average for both students. Compared with ST-OPD, it improves average accuracy by $4.39$ and $4.72$ percentage points for Qwen3-4B-Base and Qwen3-1.7B-Base, respectively. Compared with TK-OPD, the corresponding improvements are $4.34$ and $4.15$ percentage points. Moreover, TT-OPD outperforms both baselines on all seven benchmarks for each student. In terms of pass@$k$, TT-OPD again achieves the highest average for both students. It improves over ST-OPD by $4.20$ and $1.63$ percentage points and over TK-OPD by $7.88$ and $1.73$ percentage points for Qwen3-4B-Base and Qwen3-1.7B-Base, respectively. These results demonstrate the superior performance of TT-OPD over the other OPD variants across student scales.

\subsection{Sensitivity to the Number of Selected Tokens}

\begin{table*}[t]
\centering
\begingroup
\renewcommand{\arraystretch}{1.25}
\setlength{\tabcolsep}{4pt}
\begin{tabularx}{0.95\textwidth}{>{\raggedright\arraybackslash}p{0.14\textwidth}*{8}{>{\centering\arraybackslash}X}}
\toprule
 & \textbf{AIME24} & \textbf{AIME25} & \textbf{AIME26} & \textbf{AMC} & \textbf{MATH} & \textbf{Minerva} & \textbf{Olympiad} & \textbf{Avg.} \\
\midrule
ST-OPD & 21.04 & 19.17 & 14.27 & 57.83 & 84.67 & 34.33 & 53.44 & 40.68 \\
\midrule
\multicolumn{9}{c}{\textit{$k=8$}} \\
TK-OPD & 16.46 & 17.50 & 14.58 & 55.72 & 84.90 & 34.15 & 53.50 & 39.54 \\
\rowcolor{blue!8}
\textbf{TT-OPD} & \textbf{27.60} & \textbf{22.60} & \textbf{21.88} & \textbf{62.88} & \textbf{85.70} & \textbf{36.35} & \textbf{55.96} & \textbf{44.71} \\
\midrule
\multicolumn{9}{c}{\textit{$k=16$}} \\
TK-OPD & 19.79 & 19.17 & 14.27 & 59.04 & 85.23 & 35.29 & 52.31 & 40.73 \\
\rowcolor{blue!8}
\textbf{TT-OPD} & \textbf{26.77} & \textbf{23.33} & \textbf{22.92} & \textbf{64.87} & \textbf{86.08} & \textbf{36.81} & \textbf{54.70} & \textbf{45.07} \\
\midrule
\multicolumn{9}{c}{\textit{$k=32$}} \\
TK-OPD & 20.21 & 18.65 & 16.67 & 58.73 & 85.50 & 35.06 & 52.69 & 41.07 \\
\rowcolor{blue!8}
\textbf{TT-OPD} & \textbf{28.02} & \textbf{24.17} & \textbf{21.98} & \textbf{64.38} & \textbf{85.90} & \textbf{37.27} & \textbf{56.06} & \textbf{45.40} \\
\midrule
\multicolumn{9}{c}{\textit{$k=64$}} \\
TK-OPD & 22.19 & 20.00 & 15.42 & 60.24 & 84.00 & \textbf{37.55} & 53.57 & 41.85 \\
\rowcolor{blue!8}
\textbf{TT-OPD} & \textbf{28.85} & \textbf{22.60} & \textbf{24.79} & \textbf{65.06} & \textbf{86.00} & 37.50 & \textbf{55.20} & \textbf{45.71} \\
\bottomrule
\end{tabularx}
\endgroup
\caption{\textbf{Sensitivity to the number of selected tokens (accuracy, \%).} Qwen3-8B-Base-GRPO-Math is the teacher, Qwen3-4B-Base is the student, and all tested algorithms use DAPO-Math-17K as the training set. The student's top-$16$ tokens are used as the selected top-$k$ tokens. ST-OPD does not depend on $k$. \textbf{Bold} = best among the tested algorithms for each value of $k$.}
\label{tab:k-sensitivity}
\end{table*}

\findingbox{TT-OPD consistently outperforms ST-OPD and TK-OPD across all tested values of $k$.}
To evaluate the sensitivity of TT-OPD to the number of selected tokens, we vary $k$. The accuracy and pass@$k$ results are shown in \Cref{tab:k-sensitivity} and \Cref{tab:k-sensitivity-passk} (Appendix~\ref{sec:full-experimental-results}), respectively. TT-OPD achieves the highest average accuracy and average pass@$k$ for every tested $k$. Across the four values of $k$, TT-OPD improves average accuracy over ST-OPD and TK-OPD by $4.03$--$5.03$ and $3.86$--$5.17$ percentage points, respectively, while improving average pass@$k$ by $2.38$--$8.71$ and $6.97$--$8.71$ percentage points. These results demonstrate that TT-OPD consistently achieves better performance than TK-OPD.

\subsection{Training Efficiency}

\findingbox{TT-OPD increases training time by at most approximately $6\%$ over TK-OPD.}

To evaluate the computational cost of TT-OPD, we compare its training time with TK-OPD for $k\in\{8,16,32,64\}$. The experimental results are shown in \Cref{tab:training-efficiency}. Across the eight matched configurations, TT-OPD adds only 10--15 minutes. The largest relative increase is about $6\%$. This result is consistent with TT-OPD's vocabulary-independent computational cost of $O(k+1)$, which is nearly identical to the $O(k)$ computational cost of TK-OPD. These results demonstrate that TT-OPD introduces only marginal additional computational cost over TK-OPD.

\begin{table}[t]
\centering
\begingroup
\renewcommand{\arraystretch}{1.31}
\setlength{\tabcolsep}{4pt}
\begin{tabularx}{0.95\columnwidth}{>{\raggedright\arraybackslash}X>{\centering\arraybackslash}X>{\columncolor{blue!8}\centering\arraybackslash}X}
\toprule
$k$ & TK-OPD & TT-OPD \\
\midrule
\multicolumn{3}{c}{\textit{Qwen3-4B-Base Student}} \\
$k=8$  & 4 h 22 min & 4 h 34 min \\
$k=16$ & 4 h 27 min & 4 h 38 min \\
$k=32$ & 4 h 35 min & 4 h 45 min \\
$k=64$ & 4 h 38 min & 4 h 53 min \\
\midrule
\multicolumn{3}{c}{\textit{Qwen3-1.7B-Base Student}} \\
$k=8$  & 3 h 47 min & 3 h 56 min \\
$k=16$ & 3 h 53 min & 4 h 03 min \\
$k=32$ & 3 h 58 min & 4 h 09 min \\
$k=64$ & 4 h 05 min & 4 h 19 min \\
\bottomrule
\end{tabularx}
\endgroup
\caption{\textbf{Training time of TK-OPD and TT-OPD.} Qwen3-8B-Base-GRPO-Math is the teacher. Except for $k$ range in $\{8,16,32,64\}$, all other configurations are identical to those described in \Cref{subssec:Experimental Setup}
.}
\label{tab:training-efficiency}
\end{table}

\subsection{Robustness to Different Selected Top-$k$ Tokens}

\begin{table*}[!t]
\centering
\begingroup
\renewcommand{\arraystretch}{1.25}
\setlength{\tabcolsep}{4pt}
\begin{tabularx}{0.95\textwidth}{>{\raggedright\arraybackslash}p{0.14\textwidth}*{8}{>{\centering\arraybackslash}X}}
\toprule
 & \textbf{AIME24} & \textbf{AIME25} & \textbf{AIME26} & \textbf{AMC} & \textbf{MATH} & \textbf{Minerva} & \textbf{Olympiad} & \textbf{Avg.} \\
\midrule
\multicolumn{9}{c}{\textit{Student Top-$k$}} \\
TK-OPD & 19.79 & 19.17 & 14.27 & 59.04 & 85.23 & 35.29 & 52.31 & 40.73 \\
\rowcolor{blue!8}
\textbf{TT-OPD} & \textbf{26.77} & \textbf{23.33} & \textbf{22.92} & \textbf{64.87} & \textbf{86.08} & \textbf{36.81} & \textbf{54.70} & \textbf{45.07} \\
\midrule
\multicolumn{9}{c}{\textit{Teacher Top-$k$}} \\
TK-OPD & 21.56 & 17.71 & 14.90 & 57.83 & 84.75 & \textbf{35.66} & 53.76 & 40.88 \\
\rowcolor{blue!8}
\textbf{TT-OPD} & \textbf{27.08} & \textbf{25.83} & \textbf{20.21} & \textbf{64.83} & \textbf{86.58} & \textbf{35.66} & \textbf{54.70} & \textbf{44.98} \\
\midrule
\multicolumn{9}{c}{\textit{Teacher--Student Top-$k$ Overlap}} \\
TK-OPD & 20.21 & 18.33 & 15.73 & 59.30 & 85.02 & 35.02 & 53.22 & 40.98 \\
\rowcolor{blue!8}
\textbf{TT-OPD} & \textbf{26.77} & \textbf{24.17} & \textbf{23.96} & \textbf{62.69} & \textbf{86.83} & \textbf{37.50} & \textbf{55.33} & \textbf{45.32} \\
\bottomrule
\end{tabularx}
\endgroup
\caption{\textbf{Results with different selected top-$k$ tokens (accuracy, \%).} Qwen3-8B-Base-GRPO-Math is the teacher, Qwen3-4B-Base is the student, and all tested algorithms use DAPO-Math-17K as the training set. We set $k=16$. \textbf{Bold} = best among the tested algorithms within each top-$k$ block.}
\label{tab:support-selection}
\vspace{0.3em}
\begingroup
\renewcommand{\arraystretch}{1.25}
\setlength{\tabcolsep}{4pt}
\begin{tabularx}{0.95\textwidth}{>{\raggedright\arraybackslash}p{0.14\textwidth}*{8}{>{\centering\arraybackslash}X}}
\toprule
 & \textbf{AIME24} & \textbf{AIME25} & \textbf{AIME26} & \textbf{AMC} & \textbf{MATH} & \textbf{Minerva} & \textbf{Olympiad} & \textbf{Avg.} \\
\midrule
Teacher & 64.69 & 56.15 & 64.17 & 89.68 & 96.95 & 41.18 & 74.78 & 69.66 \\
\midrule
\multicolumn{9}{c}{\textit{DeepSeek-R1-Distill-Qwen-7B Student}} \\
Student & 54.58 & 40.00 & 47.50 & 82.83 & 94.27 & 37.50 & 67.15 & 60.55 \\
ST-OPD & 57.19 & 46.15 & 55.31 & 86.60 & 95.15 & 38.51 & 72.09 & 64.43 \\
TK-OPD & 58.12 & 51.15 & 61.56 & 87.50 & 95.08 & 38.47 & 73.17 & 66.44 \\
\rowcolor{blue!8}
\textbf{TT-OPD} & \textbf{63.96} & \textbf{54.90} & \textbf{62.50} & \textbf{88.55} & \textbf{96.38} & \textbf{40.49} & \textbf{73.69} & \textbf{68.64} \\
\bottomrule
\end{tabularx}
\endgroup
\caption{\textbf{Results on another model series (accuracy, \%).} Skywork-OR1-Math-7B is the teacher, DeepSeek-R1-Distill-Qwen-7B is the student, and all tested algorithms use DeepscaleR as the training set. The student's top-$16$ tokens are used as the selected top-$k$ tokens. \textbf{Bold} = best among the tested algorithms.}
\label{tab:model-series-results}
\end{table*}

\findingbox{TT-OPD consistently outperforms TK-OPD across different selected top-$k$ tokens.}
To evaluate the robustness of TT-OPD across different selected top-$k$ tokens, we use the student's top-$k$ tokens, the teacher's top-$k$ tokens, or their overlap. The accuracy and pass@$k$ results are shown in \Cref{tab:support-selection} and \Cref{tab:topk-token-sets-passk} (Appendix~\ref{sec:full-experimental-results}), respectively. TT-OPD improves average accuracy over TK-OPD by $4.10$--$4.34$ percentage points and average pass@$k$ by $5.36$--$7.88$ percentage points. These results demonstrate the robustness of TT-OPD across different constructions of the top-$k$ token set. 

\subsection{Robustness to Different Model Series}

\findingbox{TT-OPD consistently outperforms ST-OPD and TK-OPD across different model series.}

To examine the robustness of TT-OPD across different model series, we employ DeepSeek-R1-Distill-Qwen-7B~\citep{guo2025deepseekr1} as the student and Skywork-OR1-Math-7B~\citep{he2025skywork} as the teacher that is obtained by further RL post-training from DeepSeek-R1-Distill-Qwen-7B. Except for replacing the training set with DeepscaleR~\citep{tan2025deepscaler}, all configurations are identical to those described in \Cref{subssec:Experimental Setup}. The accuracy and pass@$k$ results are shown in \Cref{tab:model-series-results} and \Cref{tab:model-series-passk-results} (Appendix~\ref{sec:full-experimental-results}), respectively. TT-OPD improves average accuracy over ST-OPD and TK-OPD by $4.21$ and $2.20$ percentage points, respectively, while improving average pass@$k$ by $1.15$ and $2.76$ percentage points. Together with the Qwen3 results in \Cref{tab:main-math-results}, these results demonstrate the robustness of TT-OPD across different model series.

\begin{figure}[!t]
    \centering
    \includegraphics[width=0.98\columnwidth]{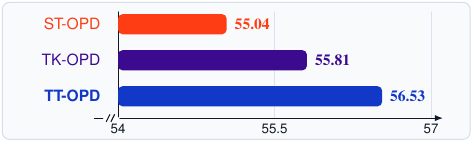}
    \vspace{-0.12cm}
    \caption{\textbf{Code-generation results (average accuracy, \%).} Qwen3-8B-Base-GRPO-Code is the teacher, Qwen3-4B-Base is the student, and all tested algorithms use the Code subset of Eurus-2-RL-Data as the training set. The student's top-$16$ tokens are used as the selected top-$k$ tokens.}
    
    \label{fig:code-generation-average}
\end{figure}

\subsection{Robustness to Different Tasks}

\findingbox{TT-OPD consistently outperforms ST-OPD and TK-OPD on code generation task.}

To test the robustness of TT-OPD across different tasks, we conduct experiments on the code-generation task. The teacher is Qwen3-8B-Base-GRPO-Code, which is trained from Qwen3-8B-Base with GRPO on 25K code-generation samples from the Code subset of Eurus-2-RL-Data~\citep{cui2025prime}, as done in ~\citet{hou2026uni}. We use Qwen3-4B-Base as the student and train on the same 25K samples for three epochs. We evaluate on HumanEval+~\citep{liu2023evalplus}, MBPP+~\citep{liu2023evalplus}, and LiveCodeBench (v6 only)~\citep{jain2024livecodebench}, as done in ~\citet{hou2026uni}. We sample four responses per prompt. All other configurations are identical to those described in \Cref{subssec:Experimental Setup}. The average accuracy results are shown in \Cref{fig:code-generation-average}, while the corresponding per-dataset accuracy and pass@4 results are reported in \Cref{tab:code-results,tab:code-passk-results} (Appendix~\ref{sec:full-experimental-results}). We observe that TT-OPD improves average accuracy over ST-OPD and TK-OPD by $1.49$ and $0.72$ percentage points, respectively, while improving average pass@4 by $1.07$ and $2.32$ percentage points. Together with the results on mathematical reasoning tasks, these results show the robustness of TT-OPD across different tasks.

\subsection{Importance of the Tail Correction}

\findingbox{Tail correction is important to TT-OPD's performance gains.}

To evaluate the importance of the tail correction, we construct TT-OPD w/o TC by removing only the tail correction term, \eg, the second term in Eq.~(\ref{eq:tt-opd-loss}). The experimental results are shown in \Cref{fig:tail-correction-ablation}, and the corresponding per-dataset accuracy and pass@$k$ results are reported in \Cref{tab:tail-correction-accuracy-full,tab:tail-correction-passk-full} (Appendix~\ref{sec:full-experimental-results}). Removing the tail correction reduces average accuracy by $5.79$ and $10.76$ percentage points for Qwen3-4B-Base and Qwen3-1.7B-Base, respectively. Relative to TK-OPD, the ablated variant is $1.45$ and $6.61$ percentage points worse for the 4B and 1.7B students, whereas TT-OPD is $4.34$ and $4.15$ percentage points better. Therefore, the tail correction is needed to account for probability mass outside the selected top-$k$ tokens. These results show that tail correction is important to TT-OPD's performance gains.

\begin{figure}[!t]
    \centering
    \includegraphics[width=0.98\columnwidth]{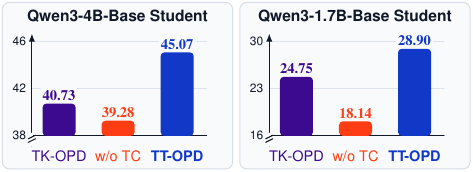}
    \caption{\textbf{Tail-correction ablation (average accuracy, \%).} Qwen3-8B-Base-GRPO-Math is the teacher, Qwen3-4B-Base and Qwen3-1.7B-Base are the students, and all tested algorithms use DAPO-Math-17K as the training set. The student's top-$16$ tokens are used as the selected top-$k$ tokens.}
    \vspace{0.115cm}
    \label{fig:tail-correction-ablation}
\end{figure}

\section{Conclusions}\label{sec:conclusion}

In this paper, we propose TT-OPD, a novel OPD variant that provides unbiased reverse KL divergence gradient estimation, rich distributional supervision, and low computational cost. To the best of our knowledge, TT-OPD is the first OPD variant to simultaneously provide all three properties. Experiments across student scales, model series, selected top-$k$ tokens, values of $k$, and tasks demonstrate that TT-OPD significantly outperforms other tested OPD variants.

\clearpage

\bibliography{references}

\clearpage
\appendix

\onecolumn

\section{Proof of Theorem \ref{thm:tt-opd-unbiased}}\label{sec:proof-tt-opd-unbiased}

\begin{proof}
We condition on a fixed student-visited state $Z_t=(x,y_{<t})$ and its selected token set $S_t^k$.
For simplicity, write $S=S_t^k$, $p(v)=p_t(v)$, $q(v)=q_t(v)$, and $A(v)=\log\frac{p(v)}{q(v)}$.
Following the stop-gradient treatment in Eq.~(\ref{eq:tt-opd-loss}), we have
\begin{equation}
\label{eq:app-opd-gradient-tt}
\begin{aligned}
    \nabla_\theta \mathcal{L}^{\mathrm{TT}}_t(\theta)
    &=
    \sum_{v\in S}A(v)\nabla_\theta p(v) \\
    &\quad+
    \mathds{1}\{y_t\notin S\}
    A(y_t)\nabla_\theta\log p(y_t).
\end{aligned}
\end{equation}
Since $y_t\sim p_t$, taking expectation over the sampled token gives
\begin{equation}
\label{eq:app-opd-gradient-expectation}
\begin{aligned}
    &\mathbb{E}_{y_t\sim p_t}
    \left[
    \nabla_\theta \mathcal{L}^{\mathrm{TT}}_t(\theta)
    \right] \\
    &=
    \sum_{v\in S}A(v)\nabla_\theta p(v)
    +
    \sum_{v\notin S}p(v)A(v)\nabla_\theta\log p(v) \\
    &=
    \sum_{v\in\mathcal{V}}A(v)\nabla_\theta p(v).
\end{aligned}
\end{equation}
On the other hand, the FV-OPD loss at this position satisfies
\begin{equation}
\label{eq:app-opd-gradient-fv}
\begin{aligned}
    \nabla_\theta \mathcal{L}^{\mathrm{FV}}_t(\theta)
    &=
    \nabla_\theta \sum_{v\in\mathcal{V}}p(v)\log\frac{p(v)}{q(v)} \\
    &=
    \sum_{v\in\mathcal{V}}\nabla_\theta p(v)
    \left(\log\frac{p(v)}{q(v)}+1\right) \\
    &=
    \sum_{v\in\mathcal{V}}A(v)\nabla_\theta p(v),
\end{aligned}
\end{equation}
where the last equality follows from $\sum_{v\in\mathcal{V}}\nabla_\theta p(v)=\nabla_\theta 1=0$.
Combining Eq.~(\ref{eq:app-opd-gradient-expectation}) and Eq.~(\ref{eq:app-opd-gradient-fv}) yields
\begin{equation}
\label{eq:app-opd-gradient-local}
\begin{aligned}
    \mathbb{E}_{y_t\sim p_t}
    \left[
    \nabla_\theta \mathcal{L}^{\mathrm{TT}}_t(\theta)
    \right]
    =
    \nabla_\theta \mathcal{L}^{\mathrm{FV}}_t(\theta).
\end{aligned}
\end{equation}
Eq.~(\ref{eq:app-opd-gradient-local}) is exactly Eq.~(\ref{eq:tt-opd-unbiased-gradient}), which proves \Cref{thm:tt-opd-unbiased}.
\end{proof}

\newpage
\section{Full Experimental Results}\label{sec:full-experimental-results}

This section provides the complete per-dataset results that supplement the main paper: pass@$k$ results for the mathematical-reasoning experiments, and both accuracy and pass@$k$ results for code generation and the tail-correction ablation.

\begin{table}[H]
\centering
\begingroup
\renewcommand{\arraystretch}{1.25}
\setlength{\tabcolsep}{4pt}
\begin{tabularx}{0.95\textwidth}{>{\raggedright\arraybackslash}p{0.14\textwidth}*{8}{>{\centering\arraybackslash}X}}
\toprule
 & \textbf{AIME24} & \textbf{AIME25} & \textbf{AIME26} & \textbf{AMC} & \textbf{MATH} & \textbf{Minerva} & \textbf{Olympiad} & \textbf{Avg.} \\
\midrule
Teacher & 53.33 & 40.00 & 40.00 & 86.75 & 93.20 & 46.32 & 70.81 & 61.49 \\
\midrule
\multicolumn{9}{c}{\textit{Qwen3-4B-Base Student}} \\
Student & 26.67 & 26.67 & 20.00 & 63.86 & 76.20 & 36.03 & 42.37 & 41.69 \\
ST-OPD & 40.00 & 40.00 & 33.33 & 79.52 & 90.40 & \textbf{42.65} & 63.70 & 55.66 \\
TK-OPD & 43.33 & 26.67 & 26.67 & 71.08 & 88.60 & 41.91 & \textbf{65.63} & 51.98 \\
\rowcolor{blue!8}
\textbf{TT-OPD} & \textbf{56.67} & \textbf{43.33} & \textbf{43.33} & \textbf{80.72} & \textbf{92.20} & 37.87 & 64.89 & \textbf{59.86} \\
\midrule
\multicolumn{9}{c}{\textit{Qwen3-1.7B-Base Student}} \\
Student & 20.00 & 10.00 & 10.00 & 53.01 & 70.80 & 28.31 & 35.70 & 32.55 \\
ST-OPD & \textbf{30.00} & 10.00 & 6.67 & \textbf{59.04} & 76.00 & 27.57 & \textbf{43.56} & 36.12 \\
TK-OPD & 23.33 & 10.00 & 16.67 & 54.22 & \textbf{76.80} & \textbf{29.78} & 41.33 & 36.02 \\
\rowcolor{blue!8}
\textbf{TT-OPD} & 26.67 & \textbf{20.00} & \textbf{23.33} & \textbf{59.04} & 70.80 & 26.47 & 37.93 & \textbf{37.75} \\
\bottomrule
\end{tabularx}
\endgroup
\caption{\textbf{Pass@$k$ results on mathematical reasoning (\%).} Unlike average accuracy, which averages correctness over all sampled responses, pass@$k$ is the percentage of problems for which at least one of the $k$ sampled responses is correct. We report pass@32 on AIME24, AIME25, AIME26, and AMC, and pass@8 on MATH, Minerva, and Olympiad. All other settings are identical to those in \Cref{tab:main-math-results}. \textbf{Bold} = best among the tested algorithms within each student block.}
\label{tab:main-math-passk-results}
\end{table}

\begin{table}[H]
\centering
\begingroup
\renewcommand{\arraystretch}{1.25}
\setlength{\tabcolsep}{4pt}
\begin{tabularx}{0.95\textwidth}{>{\raggedright\arraybackslash}p{0.14\textwidth}*{8}{>{\centering\arraybackslash}X}}
\toprule
 & \textbf{AIME24} & \textbf{AIME25} & \textbf{AIME26} & \textbf{AMC} & \textbf{MATH} & \textbf{Minerva} & \textbf{Olympiad} & \textbf{Avg.} \\
\midrule
ST-OPD & 40.00 & 40.00 & 33.33 & 79.52 & 90.40 & 42.65 & 63.70 & 55.66 \\
\midrule
\multicolumn{9}{c}{\textit{$k=8$}} \\
TK-OPD & 40.00 & 23.33 & 26.67 & 73.49 & 90.60 & \textbf{41.18} & \textbf{62.22} & 51.07 \\
\rowcolor{blue!8}
\textbf{TT-OPD} & \textbf{56.67} & \textbf{36.67} & \textbf{43.33} & \textbf{81.93} & \textbf{92.40} & 37.50 & 57.78 & \textbf{58.04} \\
\midrule
\multicolumn{9}{c}{\textit{$k=16$}} \\
TK-OPD & 43.33 & 26.67 & 26.67 & 71.08 & 88.60 & \textbf{41.91} & \textbf{65.63} & 51.98 \\
\rowcolor{blue!8}
\textbf{TT-OPD} & \textbf{56.67} & \textbf{43.33} & \textbf{43.33} & \textbf{80.72} & \textbf{92.20} & 37.87 & 64.89 & \textbf{59.86} \\
\midrule
\multicolumn{9}{c}{\textit{$k=32$}} \\
TK-OPD & 40.00 & 33.33 & 30.00 & 75.90 & \textbf{91.60} & 41.91 & 60.74 & 53.35 \\
\rowcolor{blue!8}
\textbf{TT-OPD} & \textbf{56.67} & \textbf{36.67} & \textbf{50.00} & \textbf{80.72} & 91.40 & \textbf{43.38} & \textbf{66.07} & \textbf{60.70} \\
\midrule
\multicolumn{9}{c}{\textit{$k=64$}} \\
TK-OPD & 46.67 & 30.00 & 36.67 & 80.72 & 91.20 & 42.28 & 62.07 & 55.66 \\
\rowcolor{blue!8}
\textbf{TT-OPD} & \textbf{56.67} & \textbf{53.33} & \textbf{50.00} & \textbf{87.95} & \textbf{94.00} & \textbf{43.01} & \textbf{65.63} & \textbf{64.37} \\
\bottomrule
\end{tabularx}
\endgroup
\caption{\textbf{Pass@$k$ results for different numbers of selected tokens (\%).} We report pass@32 on AIME24, AIME25, AIME26, and AMC, and pass@8 on MATH, Minerva, and Olympiad. All other settings are identical to those in \Cref{tab:k-sensitivity}. ST-OPD does not depend on $k$. \textbf{Bold} = best between TK-OPD and TT-OPD within each $k$ block.}
\label{tab:k-sensitivity-passk}
\end{table}

\begin{table}[!t]
\centering
\begingroup
\renewcommand{\arraystretch}{1.25}
\setlength{\tabcolsep}{4pt}
\begin{tabularx}{0.95\textwidth}{>{\raggedright\arraybackslash}p{0.14\textwidth}*{8}{>{\centering\arraybackslash}X}}
\toprule
 & \textbf{AIME24} & \textbf{AIME25} & \textbf{AIME26} & \textbf{AMC} & \textbf{MATH} & \textbf{Minerva} & \textbf{Olympiad} & \textbf{Avg.} \\
\midrule
\multicolumn{9}{c}{\textit{Student Top-$k$}} \\
TK-OPD & 43.33 & 26.67 & 26.67 & 71.08 & 88.60 & \textbf{41.91} & \textbf{65.63} & 51.98 \\
\rowcolor{blue!8}
\textbf{TT-OPD} & \textbf{56.67} & \textbf{43.33} & \textbf{43.33} & \textbf{80.72} & \textbf{92.20} & 37.87 & 64.89 & \textbf{59.86} \\
\midrule
\multicolumn{9}{c}{\textit{Teacher Top-$k$}} \\
TK-OPD & \textbf{46.67} & 30.00 & 30.00 & 72.29 & 88.20 & 37.87 & 62.37 & 52.49 \\
\rowcolor{blue!8}
\textbf{TT-OPD} & 43.33 & \textbf{33.33} & \textbf{43.33} & \textbf{81.93} & \textbf{92.40} & \textbf{44.85} & \textbf{65.78} & \textbf{57.85} \\
\midrule
\multicolumn{9}{c}{\textit{Teacher--Student Top-$k$ Overlap}} \\
TK-OPD & 40.00 & 26.67 & 30.00 & 77.11 & 89.00 & \textbf{44.49} & 61.33 & 52.66 \\
\rowcolor{blue!8}
\textbf{TT-OPD} & \textbf{50.00} & \textbf{46.67} & \textbf{40.00} & \textbf{80.72} & \textbf{90.80} & 40.44 & \textbf{63.26} & \textbf{58.84} \\
\bottomrule
\end{tabularx}
\endgroup
\caption{\textbf{Pass@$k$ results with different selected top-$k$ tokens (\%).} We report pass@32 on AIME24, AIME25, AIME26, and AMC, and pass@8 on MATH, Minerva, and Olympiad. All other settings are identical to those in \Cref{tab:support-selection}. \textbf{Bold} = best among the tested algorithms within each top-$k$ block.}
\label{tab:topk-token-sets-passk}
\end{table}

\begin{table}[!t]
\centering
\begingroup
\renewcommand{\arraystretch}{1.25}
\setlength{\tabcolsep}{4pt}
\begin{tabularx}{0.95\textwidth}{>{\raggedright\arraybackslash}p{0.14\textwidth}*{8}{>{\centering\arraybackslash}X}}
\toprule
 & \textbf{AIME24} & \textbf{AIME25} & \textbf{AIME26} & \textbf{AMC} & \textbf{MATH} & \textbf{Minerva} & \textbf{Olympiad} & \textbf{Avg.} \\
\midrule
Teacher & 86.67 & 80.00 & 80.00 & 100.00 & 99.00 & 47.06 & 82.37 & 82.16 \\
\midrule
\multicolumn{9}{c}{\textit{DeepSeek-R1-Distill-Qwen-7B Student}} \\
Student & 83.33 & 73.33 & 76.67 & 96.39 & 98.00 & 44.85 & 82.52 & 79.30 \\
ST-OPD & \textbf{83.33} & 76.67 & \textbf{80.00} & 97.59 & \textbf{98.80} & 47.06 & 82.22 & 80.81 \\
TK-OPD & 80.00 & 70.00 & 76.67 & 98.80 & 98.60 & 47.06 & 83.26 & 79.20 \\
\rowcolor{blue!8}
\textbf{TT-OPD} & \textbf{83.33} & \textbf{80.00} & \textbf{80.00} & \textbf{100.00} & 98.40 & \textbf{48.16} & \textbf{83.85} & \textbf{81.96} \\
\bottomrule
\end{tabularx}
\endgroup
\caption{\textbf{Pass@$k$ results on another model series (\%).} We report pass@32 on AIME24, AIME25, AIME26, and AMC, and pass@8 on MATH, Minerva, and Olympiad. All other settings are identical to those in \Cref{tab:model-series-results}. \textbf{Bold} = best among the tested algorithms.}
\label{tab:model-series-passk-results}
\end{table}

\begin{table}[!t]
\centering
\begingroup
\renewcommand{\arraystretch}{1.25}
\setlength{\tabcolsep}{4pt}
\begin{tabularx}{0.95\textwidth}{>{\raggedright\arraybackslash}p{0.14\textwidth}*{4}{>{\centering\arraybackslash}X}}
\toprule
 & \textbf{HumanEval+} & \textbf{MBPP+} & \textbf{LiveCodeBench} & \textbf{Avg.} \\
\midrule
Teacher & 77.44 & 71.96 & 27.43 & 58.94 \\
\midrule
\multicolumn{5}{c}{\textit{Qwen3-4B-Base Student}} \\
Student & 18.14 & 13.16 & 16.14 & 15.81 \\
ST-OPD & 72.10 & 68.72 & 24.29 & 55.04 \\
TK-OPD & 74.70 & 68.72 & 24.00 & 55.81 \\
\rowcolor{blue!8}
\textbf{TT-OPD} & \textbf{75.00} & \textbf{69.44} & \textbf{25.14} & \textbf{56.53} \\
\bottomrule
\end{tabularx}
\endgroup
\caption{\textbf{Per-dataset code-generation results (accuracy, \%).} Qwen3-8B-Base-GRPO-Code is the teacher, Qwen3-4B-Base is the student, and all tested algorithms use the Code subset of Eurus-2-RL-Data as the training set. The student's top-$16$ tokens are used as the selected token set. \textbf{Bold} = best among the tested algorithms.}
\label{tab:code-results}
\end{table}

\begin{table}[!t]
\centering
\begingroup
\renewcommand{\arraystretch}{1.25}
\setlength{\tabcolsep}{4pt}
\begin{tabularx}{0.95\textwidth}{>{\raggedright\arraybackslash}p{0.14\textwidth}*{4}{>{\centering\arraybackslash}X}}
\toprule
 & \textbf{HumanEval+} & \textbf{MBPP+} & \textbf{LiveCodeBench} & \textbf{Avg.} \\
\midrule
Teacher & 87.20 & 80.42 & 31.43 & 66.35 \\
\midrule
\multicolumn{5}{c}{\textit{Qwen3-4B-Base Student}} \\
Student & 48.78 & 41.27 & 28.00 & 39.35 \\
ST-OPD & 84.15 & 78.84 & 28.00 & 63.66 \\
TK-OPD & 81.71 & 77.51 & 28.00 & 62.41 \\
\rowcolor{blue!8}
\textbf{TT-OPD} & \textbf{85.37} & \textbf{79.10} & \textbf{29.71} & \textbf{64.73} \\
\bottomrule
\end{tabularx}
\endgroup
\caption{\textbf{Per-dataset code-generation results (pass@4, \%).} All other settings are identical to those in \Cref{tab:code-results}. \textbf{Bold} = best among the tested algorithms.}
\label{tab:code-passk-results}
\end{table}

\begin{table}[!t]
\centering
\begingroup
\renewcommand{\arraystretch}{1.18}
\setlength{\tabcolsep}{4pt}
\begin{tabularx}{0.95\textwidth}{>{\raggedright\arraybackslash}p{0.14\textwidth}*{8}{>{\centering\arraybackslash}X}}
\toprule
 & \textbf{AIME24} & \textbf{AIME25} & \textbf{AIME26} & \textbf{AMC} & \textbf{MATH} & \textbf{Minerva} & \textbf{Olympiad} & \textbf{Avg.} \\
\midrule
Teacher & 31.15 & 24.69 & 22.71 & 70.63 & 88.02 & 38.10 & 58.91 & 47.74 \\
\midrule
\multicolumn{9}{c}{\textit{Qwen3-4B-Base Student}} \\
Initial Student & 8.02 & 5.31 & 6.15 & 30.80 & 54.07 & 22.79 & 25.22 & 21.77 \\
TK-OPD & 19.79 & 19.17 & 14.27 & 59.04 & 85.23 & 35.29 & 52.31 & 40.73 \\
TT-OPD w/o TC & 21.56 & 20.63 & 19.58 & 57.27 & 72.70 & 33.92 & 49.30 & 39.28 \\
\rowcolor{blue!8}
\textbf{TT-OPD} & \textbf{26.77} & \textbf{23.33} & \textbf{22.92} & \textbf{64.87} & \textbf{86.08} & \textbf{36.81} & \textbf{54.70} & \textbf{45.07} \\
\midrule
\multicolumn{9}{c}{\textit{Qwen3-1.7B-Base Student}} \\
Initial Student & 3.65 & 1.04 & 2.81 & 25.34 & 52.45 & 15.99 & 22.93 & 17.74 \\
TK-OPD & 8.54 & 2.40 & 4.90 & 35.54 & 68.00 & 22.47 & 31.39 & 24.75 \\
TT-OPD w/o TC & 4.69 & 1.88 & 3.02 & 20.75 & 56.93 & 12.50 & 27.22 & 18.14 \\
\rowcolor{blue!8}
\textbf{TT-OPD} & \textbf{14.17} & \textbf{6.98} & \textbf{8.33} & \textbf{42.13} & \textbf{69.60} & \textbf{24.63} & \textbf{36.48} & \textbf{28.90} \\
\bottomrule
\end{tabularx}
\endgroup
\caption{\textbf{Per-dataset tail-correction ablation results (accuracy, \%).} Qwen3-8B-Base-GRPO-Math is the teacher, Qwen3-4B-Base and Qwen3-1.7B-Base are the students, and all tested algorithms use DAPO-Math-17K as the training set. The student's top-$16$ tokens are used as the selected token set. ``w/o TC'' removes only the tail-correction term, i.e., the second term in Eq.~(\ref{eq:tt-opd-loss}). \textbf{Bold} = best among the tested algorithms within each student block.}
\label{tab:tail-correction-accuracy-full}
\end{table}

\begin{table}[!t]
\centering
\begingroup
\renewcommand{\arraystretch}{1.18}
\setlength{\tabcolsep}{4pt}
\begin{tabularx}{0.95\textwidth}{>{\raggedright\arraybackslash}p{0.14\textwidth}*{8}{>{\centering\arraybackslash}X}}
\toprule
 & \textbf{AIME24} & \textbf{AIME25} & \textbf{AIME26} & \textbf{AMC} & \textbf{MATH} & \textbf{Minerva} & \textbf{Olympiad} & \textbf{Avg.} \\
\midrule
Teacher & 53.33 & 40.00 & 40.00 & 86.75 & 93.20 & 46.32 & 70.81 & 61.49 \\
\midrule
\multicolumn{9}{c}{\textit{Qwen3-4B-Base Student}} \\
Initial Student & 26.67 & 26.67 & 20.00 & 63.86 & 76.20 & 36.03 & 42.37 & 41.69 \\
TK-OPD & 43.33 & 26.67 & 26.67 & 71.08 & 88.60 & \textbf{41.91} & \textbf{65.63} & 51.98 \\
TT-OPD w/o TC & 33.33 & 26.67 & 30.00 & 74.70 & 89.40 & 38.24 & 64.15 & 50.93 \\
\rowcolor{blue!8}
\textbf{TT-OPD} & \textbf{56.67} & \textbf{43.33} & \textbf{43.33} & \textbf{80.72} & \textbf{92.20} & 37.87 & 64.89 & \textbf{59.86} \\
\midrule
\multicolumn{9}{c}{\textit{Qwen3-1.7B-Base Student}} \\
Initial Student & 20.00 & 10.00 & 10.00 & 53.01 & 70.80 & 28.31 & 35.70 & 32.55 \\
TK-OPD & 23.33 & 10.00 & 16.67 & 54.22 & \textbf{76.80} & \textbf{29.78} & \textbf{41.33} & 36.02 \\
TT-OPD w/o TC & 20.00 & 10.00 & 10.00 & 53.01 & 76.00 & 27.21 & 41.04 & 33.89 \\
\rowcolor{blue!8}
\textbf{TT-OPD} & \textbf{26.67} & \textbf{20.00} & \textbf{23.33} & \textbf{59.04} & 70.80 & 26.47 & 37.93 & \textbf{37.75} \\
\bottomrule
\end{tabularx}
\endgroup
\caption{\textbf{Per-dataset tail-correction ablation results (pass@$k$, \%).} We report pass@32 on AIME24, AIME25, AIME26, and AMC, and pass@8 on MATH, Minerva, and Olympiad. All other settings are identical to those in \Cref{tab:tail-correction-accuracy-full}. \textbf{Bold} = best among the tested algorithms within each student block.}
\label{tab:tail-correction-passk-full}
\end{table}

\definecolor{casequestionframe}{RGB}{95,105,120}
\definecolor{casequestionback}{RGB}{247,248,250}
\definecolor{casestframe}{RGB}{171,55,55}
\definecolor{casestback}{RGB}{255,247,247}
\definecolor{casetkframe}{RGB}{178,105,24}
\definecolor{casetkback}{RGB}{255,250,242}
\definecolor{casettframe}{RGB}{38,91,154}
\definecolor{casettback}{RGB}{244,248,255}
\definecolor{caseanalysisframe}{RGB}{74,116,84}
\definecolor{caseanalysisback}{RGB}{246,251,247}

\newcommand{\casebody}{
  \scriptsize
  \setlength{\abovedisplayskip}{3pt}
  \setlength{\belowdisplayskip}{3pt}
  \setlength{\abovedisplayshortskip}{2pt}
  \setlength{\belowdisplayshortskip}{2pt}
}
\newcommand{\caseomitted}{
  \par\smallskip
  \noindent\textcolor{casequestionframe}{
    \textit{\ldots\ continued reasoning omitted \ldots}}
  \par\smallskip
}

\newtcolorbox{casequestionbox}{
  colback=casequestionback,
  colframe=casequestionframe,
  boxrule=0.6pt,
  arc=1.2mm,
  left=2mm,
  right=2mm,
  top=1.2mm,
  bottom=1.2mm
}

\newtcolorbox{casestbox}{
  colback=casestback,
  colframe=casestframe,
  coltitle=black,
  fonttitle=\small\bfseries,
  halign title=center,
  title={ST-OPD (Incorrect)},
  boxrule=0.7pt,
  arc=1.2mm,
  left=1.7mm,
  right=1.7mm,
  top=1.5mm,
  bottom=1.5mm,
  height=0.62\textheight,
  valign=top
}

\newtcolorbox{casetkbox}{
  colback=casetkback,
  colframe=casetkframe,
  coltitle=black,
  fonttitle=\small\bfseries,
  halign title=center,
  title={TK-OPD (Incorrect)},
  boxrule=0.7pt,
  arc=1.2mm,
  left=1.7mm,
  right=1.7mm,
  top=1.5mm,
  bottom=1.5mm,
  height=0.62\textheight,
  valign=top
}

\newtcolorbox{casettbox}{
  colback=casettback,
  colframe=casettframe,
  coltitle=black,
  fonttitle=\small\bfseries,
  halign title=center,
  title={TT-OPD (Correct)},
  boxrule=0.8pt,
  arc=1.2mm,
  left=1.7mm,
  right=1.7mm,
  top=1.5mm,
  bottom=1.5mm,
  height=0.62\textheight,
  valign=top
}

\newtcolorbox{caseanalysisbox}{
  colback=caseanalysisback,
  colframe=caseanalysisframe,
  boxrule=0.6pt,
  arc=1.2mm,
  left=2mm,
  right=2mm,
  top=1mm,
  bottom=1mm
}

\clearpage
\section{Case Studies}\label{sec:case-studies}

We present three AIME-2024 examples for which TT-OPD produces the
correct answer while ST-OPD and TK-OPD both fail. Each page compares
one rollout from the three algorithms under the same student model.
To preserve the character of the original generations, the lightly
typeset excerpts retain first-person deliberation, reversals, and
verification steps. Ellipses mark only repetitive or space-limited
passages; the displayed final answers follow the original rollouts.

\subsection{Qwen3-4B-Base Student}\label{sec:case-study-qwen4b}

\begin{casequestionbox}
\small
\textbf{Question.}
Find the largest possible real part of
\[
  (75+117i)z+\frac{96+144i}{z},
  \qquad |z|=4.
\]
\textbf{Reference answer:} \(540\).
\end{casequestionbox}

\noindent
\begin{minipage}[t]{0.318\textwidth}
\begin{casestbox}
\casebody
\textbf{Model rollout (excerpt).}
``Alright, I have this problem to solve. First, I need to understand
what's being asked.'' It sets
\[
  z=4e^{i\theta},\qquad
  \frac1z=\frac14e^{-i\theta},
\]
but then rewrites the reciprocal coefficient as
\[
  \frac{96+144i}{z}
  =24(1+2i)e^{-i\theta}.
\]
``Wait, perhaps there's a better way. Maybe I can express \(z\) in
terms of its real and imaginary parts.'' After briefly trying
\(z=x+iy\), it says, ``This seems complicated. Maybe I can use the
polar form after all.''

\caseomitted

Expanding the two terms, it writes
\[
  \begin{aligned}
  \operatorname{Re}(T_1)&=300\cos\theta-468\sin\theta,\\
  \operatorname{Re}(T_2)&=24\cos\theta+48\sin\theta.
  \end{aligned}
\]
It nevertheless combines them as
\[
  \operatorname{Re}(T_1+T_2)
  =324\cos\theta-468\sin\theta
\]
and obtains
\(\sqrt{324^2+468^2}=180\sqrt{10}\).
After rechecking the square root, the rollout repeats the same answer
several times.

\medskip
\textbf{Model answer:}
\[
  \boxed{180\sqrt{10}}
\]
\end{casestbox}
\end{minipage}\hfill
\begin{minipage}[t]{0.318\textwidth}
\begin{casetkbox}
\casebody
\textbf{Model rollout (excerpt).}
``Let's compute each part separately.'' With
\(z=4e^{i\theta}\), it writes
\[
  (75+117i)z=4(75+117i)e^{i\theta}
\]
and
\[
  \frac{96+144i}{z}=24(2+3i)e^{-i\theta}.
\]
It denotes \(A=75+117i\) and \(B=2+3i\), giving
\[
  4A(\cos\theta+i\sin\theta)
  +24Be^{-i\theta}.
\]
The response then claims, ``The real part here is
\(4A\cos\theta\),'' and similarly takes the second real contribution
to be \(24B\cos\theta\).

\caseomitted

Thus it obtains
\[
  (4A+24B)\cos\theta
  =(348+540i)\cos\theta.
\]
``Wait, that doesn't seem right.'' It then discards the imaginary
coefficient, concludes that the real part is \(348\cos\theta\), and
maximizes it at \(\cos\theta=1\).

\medskip
\textbf{Model answer:}
\[
  \boxed{348}
\]
\end{casetkbox}
\end{minipage}\hfill
\begin{minipage}[t]{0.318\textwidth}
\begin{casettbox}
\casebody
\textbf{Model rollout (excerpt).}
``Let's compute each part separately.'' The rollout sets
\(z=4e^{i\theta}\) and correctly keeps
\[
  \frac{96+144i}{4e^{i\theta}}
  =(24+36i)e^{-i\theta}.
\]
Writing \(w=e^{i\theta}\), it expands
\[
  300w+468iw+24\overline w+36i\overline w.
\]
It then checks the four contributions one at a time:
\[
  \begin{aligned}
  300w&=300\cos\theta+300i\sin\theta,\\
  468iw&=-468\sin\theta+468i\cos\theta,\\
  24\overline w&=24\cos\theta-24i\sin\theta,\\
  36i\overline w&=36\sin\theta+36i\cos\theta.
  \end{aligned}
\]
After combining like terms, the real part is
\[
  324\cos\theta-432\sin\theta.
\]
``Recall that \(A\cos\theta+B\sin\theta\) has maximum
\(\sqrt{A^2+B^2}\).'' Therefore,
\[
  \sqrt{324^2+432^2}=\sqrt{291600}=540.
\]

\medskip
\textbf{Model answer:}
\[
  \boxed{540}
\]
\end{casettbox}
\end{minipage}

\begin{caseanalysisbox}
\small
\textbf{Comparison.}
ST-OPD changes \((96+144i)/4\) into \(24(1+2i)\), and TK-OPD
incorrectly takes the real part of expressions with complex
coefficients. TT-OPD preserves both sine and cosine contributions and
therefore reaches the reference answer.
\end{caseanalysisbox}

\newpage

\vspace*{6pt}

\subsection{Qwen3-1.7B-Base Student}\label{sec:case-study-qwen17b}

\begin{casequestionbox}
\small
\textbf{Question.}
Aya walks \(9\) kilometers and then spends \(t\) minutes in a coffee
shop. At \(s\) km/h, the total time is \(4\) hours; at \(s+2\) km/h,
it is \(2\) hours \(24\) minutes. How many minutes does the trip take
at \(s+\tfrac12\) km/h, including the same coffee-shop time?

\textbf{Reference answer:} \(204\).
\end{casequestionbox}

\noindent
\begin{minipage}[t]{0.318\textwidth}
\begin{casestbox}
\casebody
\textbf{Model rollout (excerpt).}
``Alright, I have this problem about Aya's morning walk. Let's break
it down step by step.'' It correctly converts \(2\) hours \(24\)
minutes to \(2.4\) hours, but then treats \(t\) itself as hours:
\[
  \frac{9}{s}+t=4,
  \qquad
  \frac{9}{s+2}+t=2.4.
\]
Subtracting gives
\[
  \frac{18}{s(s+2)}=1.6,
  \qquad s^2+2s-11.25=0.
\]
The quadratic formula yields \(s=2.5\) or \(-4.5\), and it rejects the
negative speed.

\caseomitted

Substituting into its first equation, it writes
\[
  \frac{9}{2.5}+t=4,\qquad 3.6+t=4,
\]
then states \(t=0.4\) ``minutes.'' For the requested speed,
\[
  s+\frac12=3,\qquad \frac93=3\text{ hours}.
\]
It converts only this walking time to \(180\) minutes and nevertheless
claims that the result includes the coffee-shop stop.

\medskip
\textbf{Model answer:}
\[
  \boxed{180}
\]
\end{casestbox}
\end{minipage}\hfill
\begin{minipage}[t]{0.318\textwidth}
\begin{casetkbox}
\casebody
\textbf{Model rollout (excerpt).}
``First, I notice that the total times are given in hours and
minutes.'' Unlike ST-OPD, it converts the stop consistently:
\[
  \frac{9}{s}+\frac{t}{60}=4,
  \qquad
  \frac{9}{s+2}+\frac{t}{60}=2.4.
\]
Eliminating \(t/60\), it derives
\[
  \frac{18}{s(s+2)}=1.6,
  \qquad s^2+2s-11.25=0,
\]
and selects \(s=2.5\). It then computes
\[
  3.6+\frac{t}{60}=4
  \quad\Longrightarrow\quad t=24.
\]

\caseomitted

At \(s+\tfrac12=3\), the response obtains
\[
  \frac93+\frac{24}{60}=3.4\text{ hours}
  =3\text{ hours }24\text{ minutes}.
\]
Its prose conclusion is therefore correct. However, after the heading
``Final Answer,'' the emitted answer abruptly changes to \(32\), with
no supporting calculation.

\medskip
\textbf{Model answer:}
\[
  \boxed{32}
\]
\end{casetkbox}
\end{minipage}\hfill
\begin{minipage}[t]{0.318\textwidth}
\begin{casettbox}
\casebody
\textbf{Model rollout (excerpt).}
``We'll set up equations based on the first two scenarios and solve
for \(s\) and \(t\).'' It first records
\[
  t\text{ minutes}=\frac{t}{60}\text{ hours},
\]
and consequently uses
\[
  \frac{9}{s}+\frac{t}{60}=4,
  \qquad
  \frac{9}{s+2}+\frac{t}{60}=2.4.
\]
Subtracting the equations gives
\[
  \frac{18}{s(s+2)}=1.6,
  \qquad s^2+2s-11.25=0.
\]
It explicitly checks both quadratic roots and keeps \(s=2.5\) because
speed cannot be negative.

\caseomitted

Returning to the first equation,
\[
  \frac{9}{2.5}+\frac{t}{60}=4
  \quad\Longrightarrow\quad t=24.
\]
Finally,
\[
  \frac{9}{2.5+\tfrac12}+\frac{24}{60}
  =3.4\text{ hours},
\]
and the rollout performs the requested unit conversion:
\[
  3.4\times60=204\text{ minutes}.
\]

\medskip
\textbf{Model answer:}
\[
  \boxed{204}
\]
\end{casettbox}
\end{minipage}

\begin{caseanalysisbox}
\small
\textbf{Comparison.}
ST-OPD mixes minutes and hours and drops the coffee-shop time.
TK-OPD reaches the correct intermediate result but emits an unrelated
final answer. TT-OPD maintains unit consistency and agreement between
its derivation and final answer.
\end{caseanalysisbox}

\newpage
\vspace*{12pt}

\subsection{DeepSeek-R1-Distill-Qwen-7B Student}
\label{sec:case-study-deepseek7b}

\begin{casequestionbox}
\small
\textbf{Question.}
Let \(p\) be the least prime for which \(p^2\mid n^4+1\) for some
positive integer \(n\). Find the least positive integer \(m\) such
that \(p^2\mid m^4+1\).

\textbf{Reference answer:} \(110\).
\end{casequestionbox}

\noindent
\begin{minipage}[t]{0.318\textwidth}
\begin{casestbox}
\casebody
\textbf{Model rollout (excerpt).}
After testing smaller primes, the response says, ``Wait, perhaps
\(p=17\).'' It observes
\[
  2^4=16\equiv-1\pmod{17}
\]
and checks \(f'(2)=4\cdot2^3=32\not\equiv0\pmod{17}\), so Hensel's
lemma guarantees a lift.

\caseomitted

It considers only \(x\equiv2\pmod{17}\). With \(x=2+17k\), the
rollout obtains
\[
  17+255k\equiv0\pmod{289},
\]
then divides by \(17\):
\[
  15k\equiv16\pmod{17}.
\]
Hence \(k\equiv9\pmod{17}\), so \(x=2+17\cdot9=155\). The response
repeatedly asks, ``Wait, is
\(155\) the least positive integer \(m\)?'' It checks
\(138,121,104,87,\ldots\), all from the same residue class, and then
concludes: ``The minimal solution mod \(289\) is \(m=155\), because
the lift from \(2\bmod17\) is unique.'' It never enumerates the other
three roots modulo \(17\).

\medskip
\textbf{Model answer:}
\[
  \boxed{155}
\]
\end{casestbox}
\end{minipage}\hfill
\begin{minipage}[t]{0.318\textwidth}
\begin{casetkbox}
\casebody
\textbf{Model rollout (excerpt).}
The rollout first follows the \(x\equiv2\pmod{17}\) branch and obtains
\(x=155\). It then pauses: ``Wait, perhaps there is another solution
where \(x\equiv-2\pmod{17}\).'' Taking \(x=15+17k\), it expands
\[
  50+34k\equiv288\pmod{289}.
\]
After dividing by \(17\), it solves
\[
  2k\equiv14\pmod{17}.
\]
Thus \(k\equiv7\pmod{17}\), giving \(x=15+17\cdot7=134\).

\caseomitted

The response directly verifies the new candidate:
\[
  134^2\equiv38\pmod{289},
\]
and
\[
  134^4=38^2=1444\equiv-1\pmod{289}.
\]
It then states, ``The solutions mod \(289\) are \(155\) and \(134\).
So the minimal positive solution is \(134\).'' This improves upon the
first branch but still omits the roots \(8\) and \(9\) modulo \(17\).
Its final summary reports both \(p\) and \(m\).

\medskip
\textbf{Model answer:}
\[
  \boxed{17,\;134}
\]
\end{casetkbox}
\end{minipage}\hfill
\begin{minipage}[t]{0.318\textwidth}
\begin{casettbox}
\casebody
\textbf{Model rollout (excerpt).}
The response initially finds the lift \(2\mapsto155\), but then asks,
``Perhaps there is a smaller \(m\) that is not \(2\bmod17\).'' It
enumerates every root:
\[
  x\equiv2,8,9,15\pmod{17}.
\]
For the \(x\equiv8\) branch, write \(x=8+17k\). The rollout computes
\[
  (8+17k)^4+1\equiv51+136k\pmod{289},
\]
so \(51+136k\equiv0\pmod{289}\).
Dividing by \(17\) gives
\[
  8k\equiv14\pmod{17}.
\]
Hence \(k\equiv6\pmod{17}\), so \(x=8+17\cdot6=110\). It verifies
\[
  110^2\equiv251\pmod{289},
\]
and
\[
  110^4=251^2=63001\equiv-1\pmod{289}.
\]

\caseomitted

After checking the remaining residue classes, it summarizes
\[
\begin{array}{c@{\ \mapsto\ }c@{\qquad}c@{\ \mapsto\ }c}
2 & 155 & 8 & 110\\
9 & 179 & 15 & 134
\end{array}
\]
and concludes, ``Among all four cases, the minimal \(m\) is \(110\).''

\medskip
\textbf{Model answer:}
\[
  \boxed{110}
\]
\end{casettbox}
\end{minipage}

\begin{caseanalysisbox}
\small
\textbf{Comparison.}
The incorrect answers arise from incomplete branch enumeration:
ST-OPD lifts one root and TK-OPD lifts two. TT-OPD checks all four
roots modulo \(17\), so its comparison of the lifted solutions
correctly identifies the global minimum.
\end{caseanalysisbox}

\end{document}